\documentclass[letterpaper, 10 pt, conference]{format/ieeeconf}

\usepackage{amsmath}
\usepackage{amssymb}
\usepackage{amsfonts}

\usepackage{amsthm}
\usepackage[ruled,vlined]{algorithm2e}
\usepackage{mathtools}
\usepackage{tabularx}
\usepackage{graphicx}
\usepackage{subcaption} 
\usepackage{enumerate}
\usepackage{booktabs}
\usepackage{float}
\usepackage{url}
\usepackage{verbatim}
\usepackage{booktabs} 
\usepackage{multirow}
\usepackage{color}
\usepackage{pifont}
\usepackage[linkcolor=black,citecolor=black,urlcolor=black,colorlinks=true]{hyperref}
\usepackage{cite}
\usepackage{mathrsfs}
\usepackage{xcolor}

\bibliographystyle{format/IEEEtran}
\graphicspath{{figures/}}
\IEEEoverridecommandlockouts
\overrideIEEEmargins

\newtheorem{lemma}{Lemma}

\title{LF-3PM: a LiDAR-based Framework\\for Perception-aware Planning with Perturbation-induced Metric}

\author{{Kaixin Chai$^{{\dag}2,3}$, Long Xu$^{{\dag}1,3}$, Qianhao Wang$^{1,3}$, Chao Xu$^{1,3}$, Peng Yin$^{2}$, and Fei Gao$^{1,3}$}
 \thanks{This work was supported by the ”Pioneer” and ”Leading Goose” R\&D Program of Zhejiang under Grant 2024C01170 and the National Natural Science Foundation of China under grant no. 62322314.}
 \thanks{Corresponding author: Fei Gao.}
 \thanks{$^{\dag}$Indicates equal contribution.}
 \thanks{$^{1}$State Key Laboratory of Industrial Control Technology, Zhejiang University, Hangzhou 310027, China.}
 \thanks{$^{2}$City University of Hong Kong, Hong Kong, China.}
 \thanks{$^{3}$Huzhou Institute of Zhejiang University, Huzhou 313000, China.}
 \thanks{E-mail: {\tt\small kaixinchai@outlook.com, \{gaolon,fgaoaa\} @zju.edu.cn}}
}

\begin{document}
    \maketitle
    \thispagestyle{empty}
    \pagestyle{empty}
\begin{abstract}
Just as humans can become disoriented in featureless deserts or thick fogs, not all environments are conducive to the Localization Accuracy and Stability (LAS) of autonomous robots. This paper introduces an efficient framework designed to enhance LiDAR-based LAS through strategic trajectory generation, known as Perception-aware Planning. Unlike vision-based frameworks, the LiDAR-based requires different considerations due to unique sensor attributes. Our approach focuses on two main aspects: firstly, assessing the impact of LiDAR observations on LAS. We introduce a perturbation-induced metric to provide a comprehensive and reliable evaluation of LiDAR observations. Secondly, we aim to improve motion planning efficiency. By creating a Static Observation Loss Map (SOLM) as an intermediary, we logically separate the time-intensive evaluation and motion planning phases, significantly boosting the planning process. In the experimental section, we demonstrate the effectiveness of the proposed metrics across various scenes and the feature of trajectories guided by different metrics. Ultimately, our framework is tested in a real-world scenario, enabling the robot to actively choose topologies and orientations preferable for localization. The source code is accessible at https://github.com/ZJU-FAST-Lab/LF-3PM.

\end{abstract}

\section{Introduction}
\label{sec:Introduction} 

Accurate localization is important for autonomous robots to perform complex tasks, especially in GPS-denied environments. 
Although localization algorithms~\cite{openvins,fast-lio2,vins,lio-sam} are reliable in most cases, scenarios still exist where Localization Accuracy and Stability (LAS) can get poor.
One solution is to make the robot actively choose trajectories conducive to improving LAS, known as Perception-aware Planning.

In general, a Perception-aware Planning framework pivots on two essential components: observation evaluation and trajectory generation.
By evaluating the accumulated loss of observations simulated along the candidate trajectories, the robots can choose one that is most conducive to improving LAS.
Recently, considerable progress~\cite{costante2016perception,zhang2018perception,zhang2019beyond,bartolomei2020perception,chen2024apace} has been made on vision-based Perception-aware Planning frameworks, yet related research based on LiDAR remains limited.
To improve the LiDAR-based LAS of robots, it's intuitive to adapt the design principles from vision-based frameworks.
However, the essential components require significant modifications due to the distinct attributes of cameras and LiDAR.

On the one hand, the focus of observation evaluation in the framework of the two kinds is different.
The main factors that affect the vision-based LAS are texture richness~\cite{costante2016perception} and lighting conditions~\cite{falanga2018pampc}, while for the LiDAR-based LAS, it is the geometric structure of the surroundings~\cite{loam}.

On the other hand, the efficiency of simulating visual feature points is much higher than that of simulating a LiDAR scan. The visual feature points are sparse enough to allow real-time simulation~\cite{costante2016perception}, even enabling the construction of visibility functions for each feature point~\cite{zhang2020fisher}.
However, LiDAR points are dense, making it impractical to simulate scans during motion planning.

Therefore, we need a new metric for LiDAR-based observation loss evaluation to produce reliable guidance in trajectory generation and an efficient framework that rationally separates observation evaluation from motion planning to prevent the time-consuming simulation from blocking the trajectory generation process.

\begin{figure}
    \centering
    \includegraphics[width=\columnwidth]{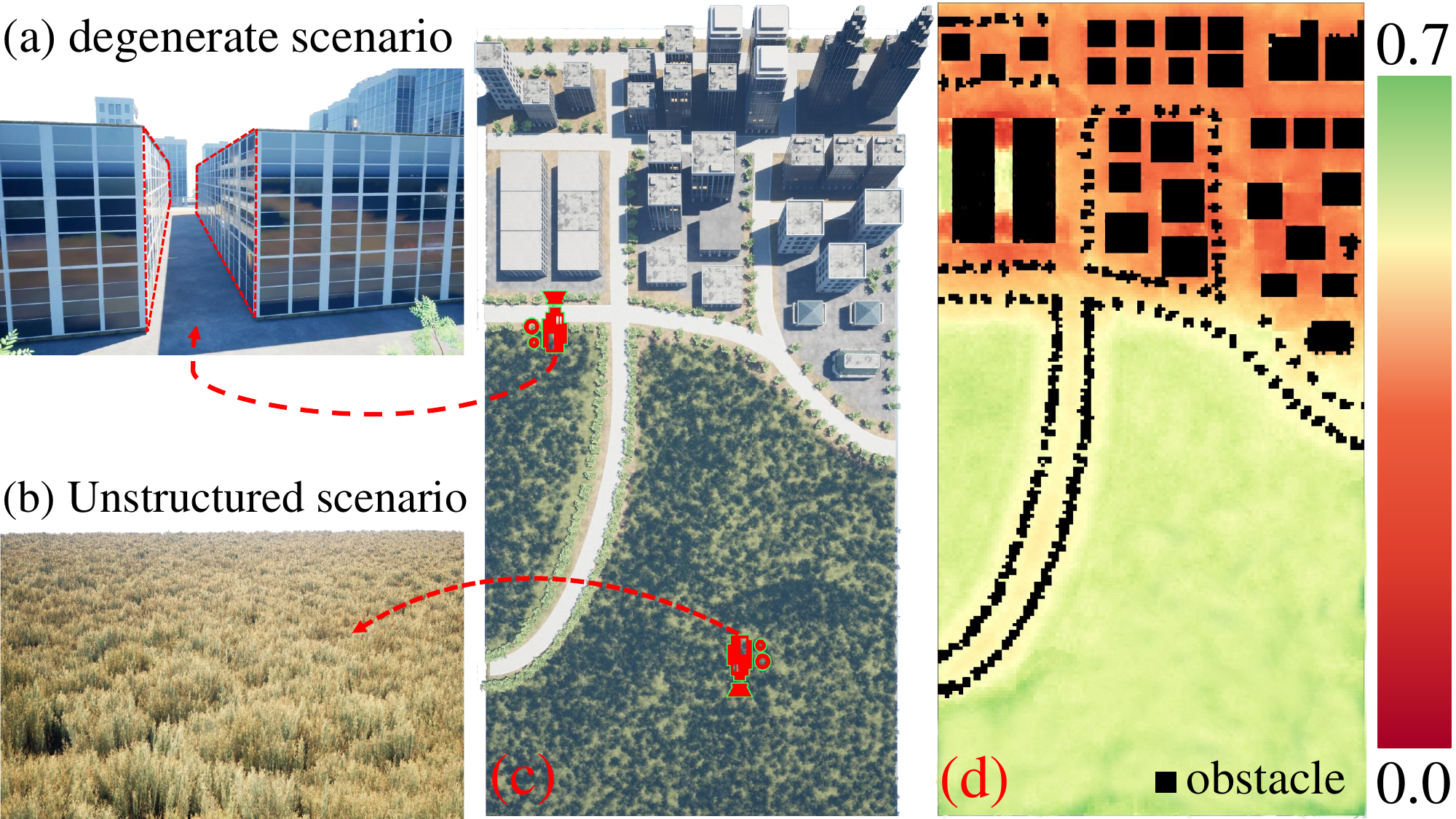}
    \caption{
    (a) Narrow corridor between two buildings.
    (b) Lush grasslands.
    (c) Bird's eye view of the entire map.
    (d) SOLM of the map with the observation model given by Eq.~(\ref{equ:observation}), where areas with smaller observation loss are preferable for localization and black areas indicate obstacles.}
    \label{fig:toutu}
    \vspace{-0.7cm}
\end{figure}

We note that the LiDAR-based LAS primarily depends on the geometric structure of the environment, which is inherently more stable than vision-based LAS due to its invariance to lighting conditions.
This stability permits the pre-evaluation of the loss of LiDAR observation at the robot's potential state space like $SE(2)$ or $SE(3)$.
In addition, since observations are roughly the same between two similar poses, the loss does not drastically change within a small area with a slight orientation difference.
Thus, we can discretize the state space into grids, evaluate the observation loss once at the center of each grid, and then use interpolation to obtain an evaluation of the intermediate states. 
We name the grid structure as Static Observation Loss Map (SOLM).

As mentioned earlier, we need to tailor a new metric to evaluate LiDAR observation loss at each grid in SOLM.
While covariance is a widely used metric~\cite{zhang2018perception} to quantify the observation loss by localization uncertainty,
it does not offer independent evaluation at one single location because the result depends on historical observations collected in other grids.
Additionally, some metrics like pre-defining the empirical principle~\cite{takemura2022perception} show insights into the localization process. 
However, lacking theoretical support may diminish the effectiveness of the metrics across diverse scenarios and with varying LiDAR configurations.

Therefore, a theoretically well-founded metric is required to evaluate LiDAR observation loss at a single location.
Given that Mainstream filtering-based~\cite{fast-lio2} or optimization-based~\cite{lio-sam} localization methods are essentially solving a Least Squares Problem, our analysis starts from the same form.
Inspired by the sensitivity analysis of linear Least Squares Problem~\cite{zhang2016degeneracy, salsp}, we propose a metric by introducing perturbations into LiDAR observations.

Fig.~\ref{fig:toutu} shows an example of SOLM. A ground robot with a rotating LiDAR localizes itself in the environment, as shown in Fig.~\ref{fig:toutu}(c).
Since the yaw angle does not affect observations with a 360-degree LiDAR, we calculate SOLM in $\mathbb{R}^2$ space to eliminate unnecessary computations. In Sec.~\ref{sec:Experiments}, a case with a limited field of view will be presented.
According to the SOLM in Fig.~\ref{fig:toutu}(d), degenerate corridors (Fig.~\ref{fig:toutu}(a)) and unstructured grasslands (Fig.~\ref{fig:toutu}(b)) are identified as less favorable for improving LAS, whereas places with rich structured features are deemed beneficial.

During motion planning, we consider SOLM alongside other factors, such as obstacle avoidance, trajectory smoothness, and agility, to generate trajectories favorable for localization.
We apply the proposed framework in more scenarios in Sec.\ref{sec:Experiments} to validate the effectiveness.
The experiments show that benefiting from the reliable metric and well-designed framework, the robot can improve its LAS by selecting topologies and orientations with low observation loss.

In summary, this paper makes the following contributions:
\begin{itemize}
\item [1)]We propose a novel perturbation-induced metric to evaluate LiDAR observation loss in SOLM calculation.
\item [2)]We design an efficient LiDAR-based Perception-aware Planning framework leveraging LiDAR attributes to decouple observation evaluation and motion planning.
\item [3)]We conduct extensive experiments to validate the effectiveness of the proposed framework. The entire project is open-sourced to promote further research in this field.
\end{itemize}

\section{Related Work}
\label{sec:RelatedWork}

In the realm of metric design for LiDAR-based Perception-aware Planning frameworks, empirical metrics~\cite{takemura2022perception} have been explored to evaluate LiDAR observation loss, offering insights into the localization process. However, these metrics are not well theoretically supported, which may lead to diminished effectiveness across various scenarios and different sensor configurations, highlighting the need for more universally applicable metrics.

Some researchers have proposed metrics~\cite{zhang2019beyond,zhang2016degeneracy,salaris2017online} that evaluate observation loss at a specific location by examining the observability of observations.
These metrics are adept at identifying degenerate scenarios, such as open fields or tunnels. Yet, degenerate scenarios are not the sole contributors to the diminished LAS. Factors like dynamic objects~\cite{ding2020lidar}, unstructured environments~\cite{guivant2004navigation}, and sensor noise~\cite{yuan2022efficient} interference also play significant roles.

Cognetti et al.~\cite{cognetti2018optimal} recognized it is insufficient to focus only on observability without considering noise and use pose estimation covariance as the metric. It works in vision-based frameworks, but as mentioned in Sec.\ref{sec:Introduction}, covariance is not an ideal metric to provide independent evaluation for SOLM calculation due to its historical observations. Without the help of SOLM, we have to simulate the LiDAR scan for covariance calculation during motion planning, which will reduce the efficiency of the entire framework.

Therefore, a theoretically founded and independent metric for evaluating LiDAR observation loss is necessary.
This paper proposes a novel metric via perturbation and sensitivity analysis~\cite{salsp}. In contrast to the approach in work~\cite{zhang2016degeneracy}, where perturbations are applied to additional constraints, we directly apply perturbations to the observations and analyze their effect on the solution. We find that the degeneracy factor derived in work~\cite{zhang2016degeneracy} is a special case of the proposed metric.

Regarding the framework for Perception-aware Planning, we focus on how existing strategies incorporate the observation loss into motion planning. 
Strategy~\cite{zhang2019beyond,zhang2020fisher} models observation loss as a differentiable function concerning the robot's pose. This approach assumes feature points are always visible, which may not hold in complex environments.
Another strategy~\cite{zhang2018perception} selects the final trajectory with the lowest covariance from several candidate trajectories that meet both obstacle avoidance and kinematic requirements.
Strategy~\cite{costante2016perception} uses covariance as a heuristic in exploring paths with Rapid Random Trees. 
The above three strategies work well in vision-based frameworks, while it is impractical for LiDAR-based Perception-aware Planning, primarily because vision feature points are sparse and can be simulated in real-time, whereas LiDAR point clouds are dense and simulating LiDAR scans is time-consuming, making trajectory generation has a long time span.

We decouple the evaluation process and motion planning into a two-phase pipeline to ensure the efficiency of motion planning. At first, the proposed perturbation-induced metric is employed to calculate SOLM, which is then stored as gray-scale images. When a new trajectory needs to be generated, the motion planning module will use the stored SOLM in the front- and back-end, finally optimizing a trajectory that can balance LAS, obstacle avoidance, and agility.

\section{Metric derivation}
\label{sec: Metric derivation}

In this section, we derive a new metric to evaluate LiDAR observation loss, facilitating SOLM computation.
In Sec.~\ref{subsec:Problem statement and method intuition}, we state the problem to be solved and give the intuition behind our method. In Sec.~\ref{subsec: Metric derivation}, we present an overview of the derivation process to outline our methodology and put details in the appendix for better readability.

\subsection{Problem statement and method intuition}
\label{subsec:Problem statement and method intuition}
Assume the robot's pose is denoted by $\boldsymbol{x}$, which has $n$ dimensions.
Data captured at pose $\boldsymbol{x}$ by sensors can be utilized to construct error-based observations, denoted as $\boldsymbol{h}(\boldsymbol{x})=[h_1(\boldsymbol{x}),h_2(\boldsymbol{x}),...,h_m(\boldsymbol{x})]^\text T$.
An example of $h_j(\boldsymbol{x})$ is given by Eq.(\ref{equ:observation}) in Sec.~\ref{sec:Experiments}, which means the distance from point $j$ to its nearest plane.
We aim to develop a metric to evaluate LiDAR observation loss that quantifies the effect of a LiDAR observation $\boldsymbol{h}(\boldsymbol{x})$ on the robot LAS.
The metric can be formulated as:

\begin{equation}
q=Q(\boldsymbol{h}(\boldsymbol{x})),\label{eq:metric}
\end{equation}
where $Q$ denotes the metric, and scalar $q$ is the observation loss. Regions with low $q$ values are more conducive to maintaining good LAS.

As mentioned in Sec.~\ref{sec:Introduction}, mainstream localization algorithms~\cite{fast-lio2,lio-sam} are essentially solving a Least Square Problem.
To derive a concrete form of $Q$, we explore how observations $\boldsymbol{h}(\boldsymbol{x})$ influence robots' LAS in the linear Least Square Problem context.
Consider the pose estimation problem as the following nonlinear Least Square Problem:
\begin{equation}
\boldsymbol x^*=\arg\mathop{\min}\limits_{\boldsymbol{x}} \frac{1}{2}\sum_{j=1}^{m}\lVert h_j(\boldsymbol{x})\rVert^2_2.\label{problem:nlsp}
\end{equation}
Usually, we solve Eq.(\ref{problem:nlsp}) iteratively with $\boldsymbol x_0$, an initial estimate of $\boldsymbol x$. Suppose we iterate one step using Gauss-Newton method\cite{optimization} to obtain $\boldsymbol{x}_\text{opt}$, which is equivalent to solving the following optimization problem:
\begin{align}
\boldsymbol{x}_{\text{opt}}=\arg\mathop{\min}\limits_{\boldsymbol{x}}\frac{1}{2}\lVert A(\boldsymbol x-\boldsymbol x_{0})-\boldsymbol b\rVert_2^2,\label{problem:llsp}
\end{align}
where $A\in\mathbb R^{m\times n},\boldsymbol b\in\mathbb R^m$. Each row of $A$ is $\nabla_{\boldsymbol{x}}^{\text T}h_j|_{\boldsymbol{x}=\boldsymbol{x}_0}$, each row of $\boldsymbol b$ is $-h_j(\boldsymbol{x}_0)$, and optimal pose estimation $\boldsymbol{x}_\text{opt}$ is directly affected by observations $\boldsymbol{h}(\boldsymbol{x})$.

In fact, the observations usually contain sensor noise, erroneous associations, etc., which can be regarded as a perturbation applied to the exact observations.
So we actually solve for $\hat{\boldsymbol{x}}_\text{opt}$ under disturbed observations $\hat{\boldsymbol{h}}(\boldsymbol{x})$. Naturally, we expect the displacement of the solution $\|\hat{\boldsymbol{x}}_\text{opt}-{\boldsymbol{x}}_\text{opt}\|$ caused by perturbations as small as possible. 

According to the linear Least Square Problem sensitivity theory~\cite{salsp}, given a certain disturbance on observations, the displacement of $\boldsymbol{x}_\text{opt}$ depends on $\boldsymbol{h}(\boldsymbol{x})$. In other words, the attributes of $\boldsymbol{h}(\boldsymbol{x})$ decide the robustness of the solution. 
Hence, if the metric $Q$ effectively captures the perturbation-induced displacement of $\boldsymbol{x}_\text{opt}$, it can serve as a reliable measurement of observation loss at the given pose $\boldsymbol{x}$. 

\subsection{Metric derivation}
\label{subsec: Metric derivation}
If we have a smooth enough perturbation mapping $\mathcal{V}_{\delta\boldsymbol{v}}$ and a perturbation $\delta\boldsymbol{v}\in\mathbb{R}^\text V$ that change the observations $\boldsymbol{h}$ to $\hat{\boldsymbol{h}}$, where $\mathcal{V}_{\delta\boldsymbol{v}}:\boldsymbol{h}(\boldsymbol{x})\mapsto\hat{\boldsymbol{h}}(\boldsymbol{x},\delta\boldsymbol{v})$, $\boldsymbol{x}_\text{opt}$ will be changed to $\hat{\boldsymbol{x}}_\text{opt}$.
Note that the perturbation mapping $\mathcal{V}_{\delta\boldsymbol{v}}$ with different expressions will lead to different changes in $\boldsymbol{x}_\text{opt}$. In this work, we assume that the mapping $\mathcal{V}_{\delta\boldsymbol{v}}$ has the form of a linear transformation with a bias, which can be expressed as the following equations:
\begin{align}
    \hat{\boldsymbol{h}}(\boldsymbol{x},\delta\boldsymbol{v})&=(I+\delta K)\boldsymbol{h}(\boldsymbol{x})+\delta\boldsymbol{t},\\
    \delta K&=g(\delta\boldsymbol{k})\in\mathbb R^{m\times m},\\
    \delta\boldsymbol{v} &= [\delta\boldsymbol{k}, \delta\boldsymbol{t}]^\text{T},
\end{align}
where $\delta\boldsymbol{t}\in\mathbb{R}^{m}, \text V=m^2+m$. The role of $g$ is to resize the vector $\delta\boldsymbol{k}\in\mathbb{R}^{m^{2}}$ to the matrix $\delta K$ in row-major order~\cite{row-major}.
Substituting $\hat{\boldsymbol{h}}(\boldsymbol{x})$ into Problem~(\ref{problem:llsp}), $\hat{\boldsymbol{x}}_\text{opt}$ can be obtained by solving the following optimization problem:
\begin{align}
\hat{\boldsymbol{x}}_\text{opt}=\arg\mathop{\min}\limits_{\boldsymbol{x}}\frac{1}{2}\lVert& (I+\delta K)A(\boldsymbol x-\boldsymbol x_{0})\nonumber\\
&-(\boldsymbol b-\delta K\boldsymbol{b}-\delta\boldsymbol{t})\rVert_2^2.
\end{align}
Let $\Delta\boldsymbol{x}=\boldsymbol{x}-\boldsymbol{x}_0$, we have:
\begin{align}
\boldsymbol{x}_\text{opt}&=\arg\mathop{\min}\limits_{\Delta\boldsymbol{x}}\frac{1}{2}\lVert A \Delta\boldsymbol{x}-\boldsymbol b\rVert^2_2+\boldsymbol{x}_0,\\
\hat{\boldsymbol{x}}_\text{opt}&=\arg\mathop{\min}\limits_{\Delta\boldsymbol{x}}\frac{1}{2}\lVert (A+\Delta A) \Delta\boldsymbol{x}-(\boldsymbol b-\Delta \boldsymbol b)\rVert^2_2+\boldsymbol{x}_0,
\end{align}
where $\Delta A=g(\delta\boldsymbol{k})A$ and $\Delta \boldsymbol b=g(\delta\boldsymbol{k})\boldsymbol{b}+\delta\boldsymbol{t}.$

What we care is how much small $\delta\boldsymbol k$ and $\delta\boldsymbol t$ can affect the displacement norm of the solution $\|\hat{\boldsymbol{x}}_\text{opt}-{\boldsymbol{x}}_\text{opt}\|$. We find that this problem can be converted to a sensitivity analysis problem for the optimal solution of a linear Least Square Problem. Work\cite{salsp} summarizes a number of methods for obtaining the upper bound on the variation of the solution caused by $\Delta A$ and $\Delta\boldsymbol b$. Let the expression for the upper bound be $E(\lVert\Delta A\rVert,\lVert\Delta\boldsymbol b\rVert)$, we have
\begin{equation}
\lVert\hat{\boldsymbol{x}}_\text{opt}-\boldsymbol{x}_\text{opt}\rVert
\leq E(\lVert\Delta A\rVert,\lVert\Delta\boldsymbol b\rVert).\label{eq:err_bound}
\end{equation}
Writing the perturbations $\delta \boldsymbol k$ and $\delta\boldsymbol t$ as: $\delta\boldsymbol k=r_{\boldsymbol k}\boldsymbol \alpha,\delta\boldsymbol t=r_{\boldsymbol t}\boldsymbol \beta$, where $\boldsymbol \alpha^\text T\boldsymbol \alpha=\boldsymbol \beta^\text T\boldsymbol \beta=1,r_{\boldsymbol k}\ge0,r_{\boldsymbol t}\ge0$, and substituting them into the upper bound $E$, we have: 
\begin{align}
E(\lVert\Delta A\rVert,\lVert\Delta\boldsymbol b\rVert)&=E(\lVert g(\delta\boldsymbol{k})A\rVert, \lVert g(\delta\boldsymbol{k})\boldsymbol{b}+\delta\boldsymbol{t}\rVert)\nonumber\\
&\triangleq F(r_{\boldsymbol k}\boldsymbol{\alpha},r_{\boldsymbol t}\boldsymbol{\beta}).
\end{align}
Treating $\boldsymbol{\alpha},\boldsymbol{\beta}$ as constants first and linearizing $F$ near $r_{\boldsymbol k}=r_{\boldsymbol t}=0$ (which means the intensities of the perturbations tend to zero), we have
\begin{align}
F(r_{\boldsymbol k}\boldsymbol{\alpha},r_{\boldsymbol t}\boldsymbol{\beta}) 
&\approx\left.\frac{\partial F}{\partial r_{\boldsymbol k}}\right|_{r_{\boldsymbol k}=r_{\boldsymbol t}=0}r_{\boldsymbol k}+ \left.\frac{\partial F}{\partial r_{\boldsymbol t}}\right|_{r_{\boldsymbol k}=r_{\boldsymbol t}=0}r_{\boldsymbol t}\nonumber\\
&\triangleq d_1(\boldsymbol{\alpha,\beta})r_{\boldsymbol k}+d_2(\boldsymbol{\alpha},\boldsymbol{\beta})r_{\boldsymbol t}.
\end{align}
We note that $d_1$ and $d_2$ serve as an amplifier of perturbations, which, in other words, denotes the sensitivity of $F$ to the perturbations.
As the upper bound of the displacement of the solution $\|\hat{\boldsymbol{x}}_\text{opt}-{\boldsymbol{x}}_\text{opt}\|$, we expect $F$ less sensitive to the perturbations, which will constrain the sensitivity of displacement to the perturbations at a level no more than the sensitivity of $F$, because the inequality sign still holds for the derivation of both sides of Inequality (\ref{eq:err_bound}) at zero. This can be easily proved by considering the functions on the left and right sides of Inequality (\ref{eq:err_bound}) as $f_1$ and $f_2$ in Lemma~\ref{app: A}, respectively. 
Thus, we define the metric $Q$ as the sensitivity of the upper bound $F$ to the perturbations:
\begin{equation}
   q=\sqrt{w_1d_1^2(\boldsymbol{\alpha}_\mathcal{G},\boldsymbol{\beta}_\mathcal{G})+w_2d_2^2(\boldsymbol{\alpha}_\mathcal{G}, \boldsymbol{\beta}_\mathcal{G})},\label{eq:qo}
\end{equation}
where $\boldsymbol{\alpha}_\mathcal{G}, \boldsymbol{\beta}_\mathcal{G}$ are user-chosen directions of perturbations, $w_1>0,w_2>0,w_1+w_2=1$ are the weights of $d_1, d_2$, respectively. Here, instead of performing sensitivity analysis on the derivation of $\lVert \hat{\boldsymbol{x}}_\text{opt}-\boldsymbol{x}_\text{opt}\rVert$ directly, we perform sensitivity analysis on the upper bound $E$ because the expanded expression for $\lVert \hat{\boldsymbol{x}}_\text{opt}-\boldsymbol{x}_\text{opt}\rVert$ is too complicated to be analyzed.
After simplifying, what we need to do is pursue observations $\boldsymbol{h}(\boldsymbol{x})$ with lower $q$ value to keep low sensitivity of displacement  $\|\hat{\boldsymbol{x}}_\text{opt}-{\boldsymbol{x}}_\text{opt}\|$ to perturbations, which is conducive to improving the LAS.

\begin{figure}
    \centering
    \subcaptionbox{$\lVert\partial F/\partial\delta\boldsymbol{t}\rVert_2^2$ when $\delta\boldsymbol{t}$ is near zero}    {\includegraphics[width=1.0\columnwidth]{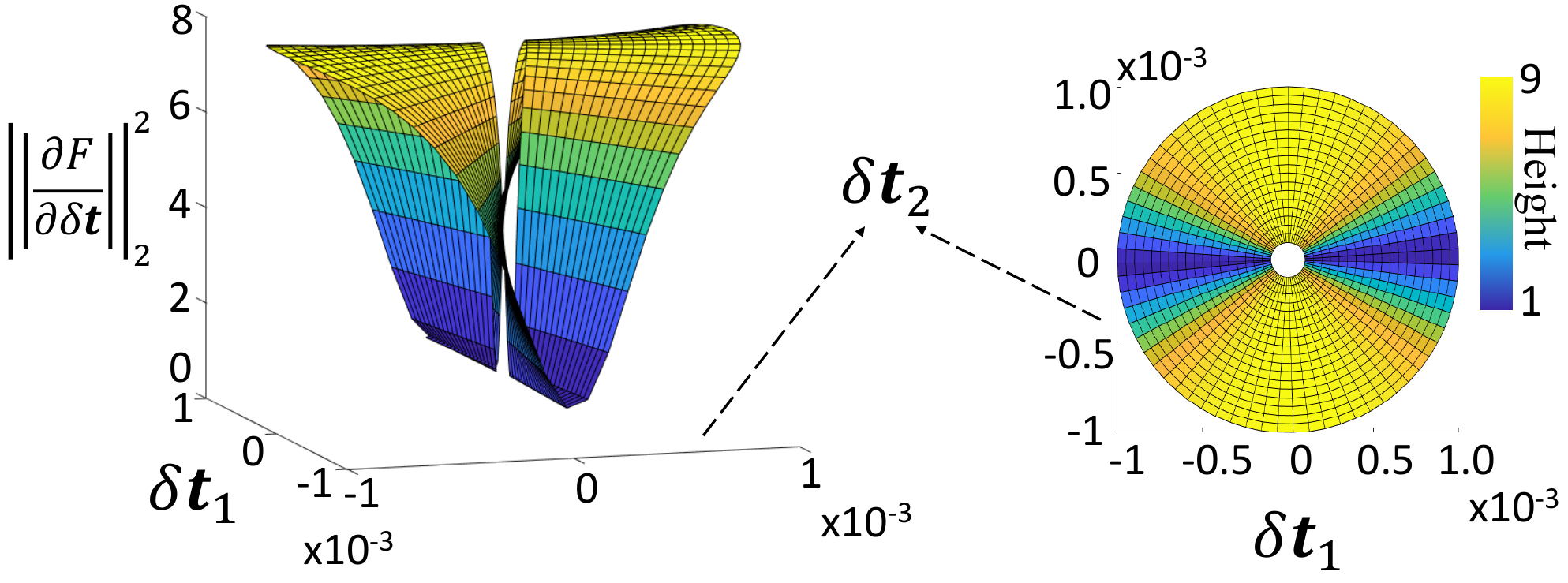}}
    \subcaptionbox{$\boldsymbol{\beta}-\lvert q_o\rvert$ curve in polar coordinates}{\includegraphics[width=1.0\columnwidth]{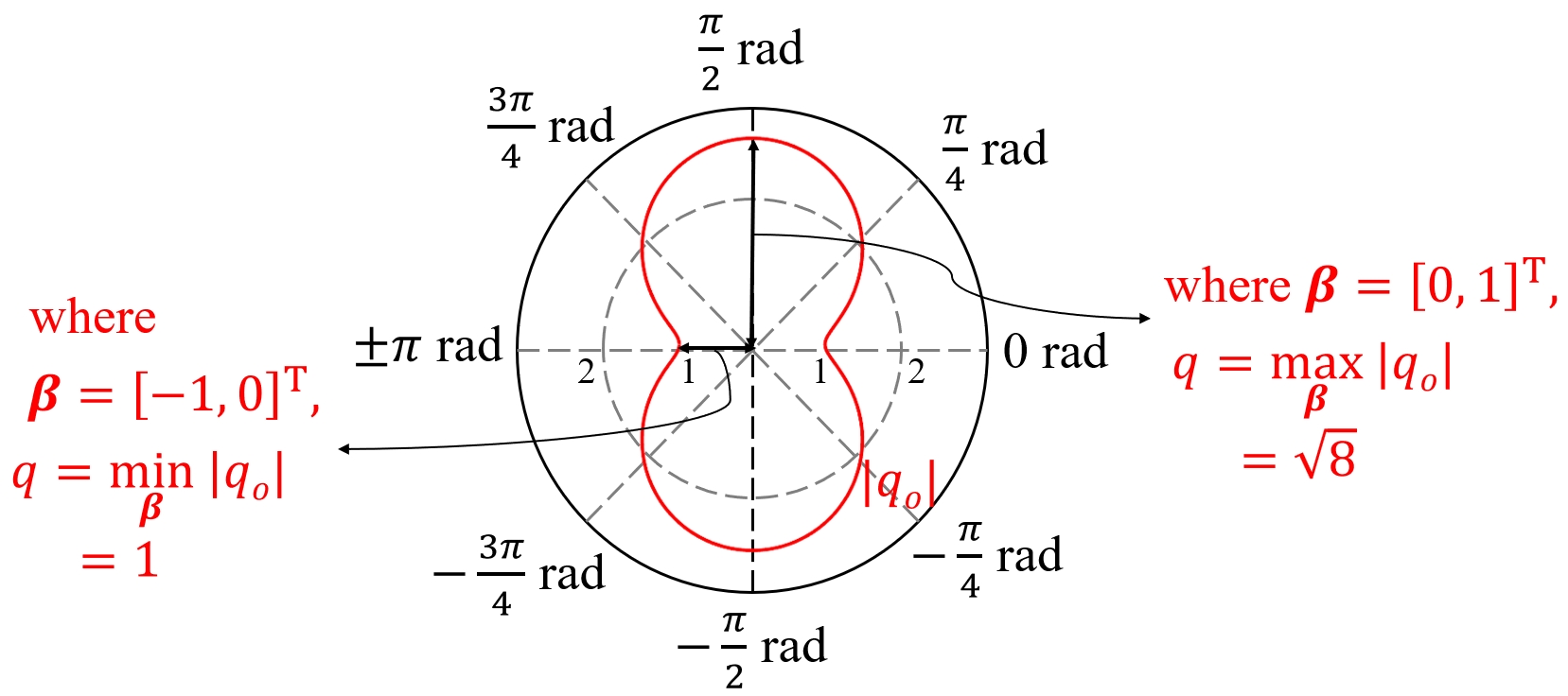}}
    \caption{A special example for illustrating the proposed metric, where $E(\lVert\Delta A\rVert,\lVert\Delta\boldsymbol b\rVert)=\sqrt{\delta\boldsymbol{t}_1^2+8\delta\boldsymbol{t}_2^2},\ \delta\boldsymbol{t}=[\delta\boldsymbol{t}_1,\delta\boldsymbol{t}_2]^\text T\in\mathbb{R}^2$. Figure (a) illustrates the results of directly solving for $F$ on the derivative of $\delta\boldsymbol{t}$, where $\lVert\partial F/\partial\delta\boldsymbol{t}\rVert_2^2=(\delta\boldsymbol{t}_1^2+64\delta\boldsymbol{t}_2^2)/(\delta\boldsymbol{t}_1^2+8\delta\boldsymbol{t}_2^2)$. The figure to the right is a top view of the surface on the left. Since the derivative of $F$ does not exist at zero, there is a hole in the surface. Figure (b) shows how the proposed metric looks like in the polar coordinate system $\theta-r$, where $\boldsymbol{\beta}=[\cos\theta,\sin\theta]^\text T,\ r=\lvert q_o\rvert=\lvert d_2\rvert=\sqrt{\cos\theta^2+8\sin\theta^2}$. We can clearly see the effect of perturbations with different directions on the sensitivity of the upper bound $E$ from this figure.\label{bbb}}
    \label{fig:eg_metric}
    \vspace{-0.2cm}
\end{figure}

The intuition for defining the metric as Eq.(\ref{eq:qo}) is that the directions of the perturbations can affect the sensitivity of the upper bound $E$.  From a mathematical point of view, it is impossible to obtain the sensitivity of $E$ by direct derivation of $\delta\boldsymbol{k}$ and $\delta\boldsymbol{t}$ without treating $\boldsymbol{\alpha},\boldsymbol{\beta}$ as constants first, since the derivatives of the norms usually do not exist at zero. To enhance the clarity of the abstract mathematical procedure, we present an example when $m=2,\ w_2=1$, visualizing $\lVert\partial F/\partial\delta\boldsymbol{t}\rVert_2^2$ and $\lvert q_o\rvert=\sqrt{w_1d_1^2+w_2d_2^2}$ in Fig. \ref{fig:eg_metric}.

In this work, we choose one of the upper bounds derived in the work \cite{lawson1995} to compute the metric. According to the results of the work\cite{lawson1995}, suppose that $A$ has full column rank and $\lVert\Delta A\rVert_2<\sigma_1$, where $\sigma_1$ is the smallest singular value of $A$, we have
\begin{align}
E(\lVert\Delta A\rVert,\lVert\Delta\boldsymbol b\rVert)&=\frac{\lVert\Delta\boldsymbol{x}^*\rVert_2+\sigma_1^{-1}\lVert\boldsymbol{r}\rVert_2}{\sigma_1-\lVert\Delta A\rVert_2}\lVert \Delta A\rVert_F\nonumber\\
&\quad+\frac{1}{\sigma_1-\lVert\Delta A\rVert_2}\lVert \Delta\boldsymbol{b}\rVert_2,\label{eq:sp_errb}
\end{align}
where $\Delta\boldsymbol{x}^*=\arg\mathop{\min}\limits_{\Delta\boldsymbol{x}}\frac{1}{2}\lVert A \Delta\boldsymbol{x}-\boldsymbol b\rVert^2_2,\boldsymbol{r}=A\Delta\boldsymbol{x}^*-\boldsymbol{b}$.

Writing $\boldsymbol{\alpha}$ as $\boldsymbol{\alpha}=[\delta\boldsymbol{k}_1^\text T,\delta\boldsymbol{k}_2^\text T,...,\delta\boldsymbol{k}_m^\text T]^\text T$ and calculating the corresponding derivatives (Details can be found in Appendix.~\ref{app: B}), we can obtain:

\begin{equation}
q(\boldsymbol{\alpha})=\frac{1}{\sigma_1}\sqrt{w_1\sum_{j=1}^m\delta\boldsymbol{k}_j^\text T\Phi\delta\boldsymbol{k}_j+w_2},\label{eq:sp_metric}
\end{equation}
where $\Phi=\xi^2AA^\text T+\boldsymbol{b}\boldsymbol{b}^\text T,\xi=\lVert\Delta\boldsymbol{x}^*\rVert_2+\lVert\boldsymbol{r}\rVert_2/\sigma_1$.

Noting that $\Phi$ is a symmetric semi-positive definite matrix and $m\ge n$, we can diagonalize it: $\Phi=P\Lambda P^\text T$, where $PP^\text T=I,\Lambda=\text{diag}(\lambda_m,\lambda_{m-1},...,\lambda_1)$, $0=\lambda_1=\lambda_2=...=\lambda_{m-n-1}\le\lambda_{m-n}\le...\le\lambda_m$ are eigenvalues of $\Phi$. Since $\boldsymbol{\alpha}^\text T\boldsymbol{\alpha}=1$, Eq.(\ref{eq:sp_metric}) is equivalent to:
\begin{equation}
q(\boldsymbol{\alpha})=\frac{1}{\sigma_1}\sqrt{w_1\sum_{j=1}^m\delta\boldsymbol{k}_j^\text T\Lambda\delta\boldsymbol{k}_j+w_2}.
\end{equation}

As illustrated in Fig. \ref{fig:eg_metric}$(b)$, the evaluation is affected by the direction of perturbations. With eigenvalue analysis, we can obtain the range of evaluation result:  $\sum_{j=1}^m\delta\boldsymbol{k}_{j}^\text T\Lambda\delta\boldsymbol{k}_{j} \ge \sum_{j=1}^m\lambda_1\delta\boldsymbol{k}_{j}^\text T\delta\boldsymbol{k}_{j}=\lambda_1$
and 
$\sum_{j=1}^m\delta\boldsymbol{k}_j^\text T\Lambda\delta\boldsymbol{k}_j\le \sum_{j=1}^m\lambda_m\delta\boldsymbol{k}_j^\text T\delta\boldsymbol{k}_j=\lambda_m$.

Thus, choosing different value of $\sum_{j=1}^m\delta\boldsymbol{k}_{j}^\text T\Lambda\delta\boldsymbol{k}_{j}$, we can obtain different forms of $q$. The three most representative forms are:

\begin{equation}
    q^{(\min)}=\frac{1}{\sigma_1}\sqrt{w_1\lambda_1+w_2}=\frac{\sqrt{w_2}}{\sigma_1}.\label{equ:lambda_min}
\end{equation}
\begin{equation}
    q^{(n)}=\frac{1}{\sigma_1}\sqrt{w_1\lambda_{m-n+1}+w_2},\label{equ:lambda_n}
\end{equation}
\begin{equation}
    q^{(\max)}=\frac{1}{\sigma_1}\sqrt{w_1\lambda_m+w_2},\label{equ:lambda_max}
\end{equation}

Different strategies actually represent different conservativeness. $q^{(\max)}$ is the most conservative estimate, choosing the direction in which the perturbation has the greatest impact. On the contrary, $q^{(\min)}$ is the least conservative strategy, which is actually the degradation factor in literature~\cite{zhang2016degeneracy}. $q^{(n)}$ is an intermediate strategy between $q^{(\max)}$ and $q^{(\min)}$, where $n$ denotes the dimension of the state $\boldsymbol{x}$ to be estimated. In Sec.~\ref{sec:Experiments}, we will show the effect of strategies with different levels of conservativeness in observation evaluation and trajectory generation.

\begin{figure*}[h]
    \centering
    \includegraphics[width=\textwidth]{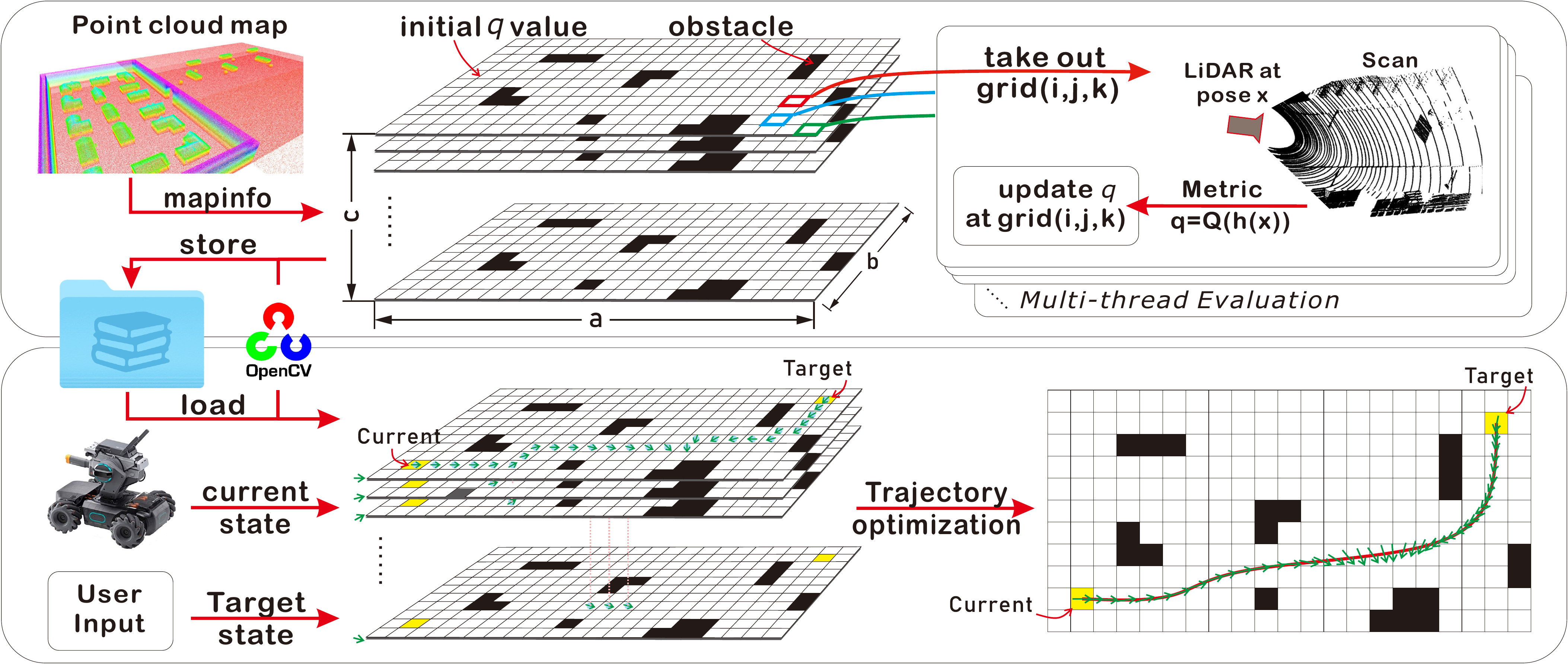}
    \caption{The proposed perception-aware planning framework. The \textbf{top} part represents the SOLM calculation process, while the \textbf{bottom} part represents the motion planning process.}
    \label{fig:pipeline}
    \vspace{-0.4cm}
\end{figure*}

\section{Framework design}
\label{sec: Framework design}
In this section, we show the framework of the proposed LiDAR-based Perception-aware Planning,
including SOLM calculation with the metrics derived in Sec.~\ref{sec: Metric derivation}, trajectory planning based on SOLM, and the relation of these two parts.

\subsection{SOLM Calculation}
\label{subsec:SOLM Calculation}
The upper part of Fig.~\ref{fig:pipeline} shows the process of SOLM calculation, including initialization, updating, and saving.

To start with, we initialize the SOLM according to the robot's potential state space. Assume the robot's state is $\boldsymbol{x}= [x,y,\theta]^\text T\in SE(2)$. Given the resolution on each dimension, we discretize the $SE(2)$ state space a grid structure with the size of $a,b$, and $c$ for dimensions $x$, $y$, and $\theta$, respectively.
The obstacles within the given point cloud map will be marked on the SOLM, where the observation evaluation will be skipped to reduce unnecessary calculations.

Then, we evaluate the observation loss at each grid. Firstly, the information of a non-obstacle $\text{grid}(i,j,k)$ is taken out from the SOLM.
Secondly, we simulate a LiDAR scan at the state represented by the center of this grid.
A ray-casting-based scan simulator with GPU acceleration can significantly speed up the simulation process, like the one used in work~\cite{kong2023marsim}.
The simulated scan will formulate observations $\boldsymbol{h}(\boldsymbol{x})$ follow the observation model like Eq.~(\ref{equ:observation}).
Thirdly, we evaluate the observation loss at state $\boldsymbol{x}$ with the metric $Q$ derived in Sec.~\ref{sec: Metric derivation}. Finally, we update the evaluation result $q$ into $\text{grid}(i,j,k)$.
It is noted that since the evaluation process at different grids is independent of each other, the process can be performed in parallel.
Once all the non-obstacle grids have been evaluated, we obtain the SOLM that can be used for motion planning. 
In practice, due to the similar structure of the picture, we store the SOLM as multi-channel images for the $SE(2)$ case. For higher-dimensional state spaces, like $SE(3)$, we store the SOLM as a tensor.

\subsection{Motion planning}
\label{subsec:Motion planning}

The lower part of Fig.~\ref{fig:pipeline} shows the process of motion planning, including path searching and trajectory optimization.

After loading SOLM from stored images, we restore the knowledge of observation loss in the robot's state space.
For front-end path searching, we adopt the SOLM as the cost function and perform Breadth-First-Search (BFS) in the grid map, which allows for full consideration of all topologies and orientations in the environment to emphasize the effectiveness in improving the robot's LAS of the proposed Perception-aware Planning framework.

For trajectory optimization, similar to the work\cite{uneven}, we use piecewise polynomials $\boldsymbol{x}(t)=[x(t),y(t),\theta(t)]^\text T$ to represent state trajectories, where $[x,y]\in\mathbb{R}^2$ denotes the position of the robot in the world frame, $\theta\in SO(2)$ denotes the yaw angle of the robot, as mentioned in last subsection. In this work, the cubic spline is chosen to represent the trajectory for the smoothness. We formulate the perception-aware trajectory planning problem for ground robots as the following optimization problem:
\begin{align}
&\min_{\boldsymbol{c},\boldsymbol{T}}\int_0^{T_s}q(\boldsymbol{x}(t))dt+\rho_TT_s\label{problem:opt}\\
&s.t.\ \ \boldsymbol{M}\boldsymbol{c}=\boldsymbol{b}, \quad \boldsymbol{T}\ge\boldsymbol{0},\label{con:minco}\\
&\quad \quad \text{Nonholo}(\boldsymbol{x})=0,\label{con:nonholo}\\
&\quad\quad \dot{\boldsymbol{x}}^2-([v_{\text{mlon}},v_{\text{mlat}},w_{\text{max}}]^\text T)^2\leq[0,0,0]^\text T,\label{con:dyn}\\
&\quad \quad \text{SDF}(\boldsymbol{x})\ge r_\text{safe},\label{con:esdf}
\end{align}
where $\boldsymbol{c}\in\mathbb{R}^{4\text N\times3}$ denote coefficient matrix, $\text N$ denotes the number of segments of the piecewise polynomials. Each element in $\boldsymbol{T}=[T_1,T_2,...,T_\text N]^\text T\in\mathbb{R}^\text N$ denotes the duration of a piece of the trajectory. In the object function, $q(\boldsymbol{x}(t))$ denote observation loss at state $\boldsymbol{x}(t)$, $T_s=T_1+T_2+...+T_\text N$ is the total duration of the trajectory, $\rho_T$ is a positive weight to ensure the aggressiveness of the trajectory. 

Conditions (\ref{con:minco}) denote the continuity constraints of the cubic spline at interpolation points and the positive time, where the inequality is taken element-wise. Conditions (\ref{con:nonholo}) denote the possible nonholonomic constraints of the robot. For car-like robots or differential robots, it is $\dot{\boldsymbol{x}}\sin\theta-\dot{\boldsymbol{y}}\cos\theta=0$. While for omnidirectional robots, this constraint does not exist. Conditions (\ref{con:dyn}) are dynamic feasibility constraints, where the inequality and square operation are also taken element-wise. $v_{\text{mlon}},v_{\text{mlat}},w_{\text{max}}$ denote maximum longitude velocity, latitude velocity, and angular velocity, respectively. Condition (\ref{con:esdf}) denotes safety constraint, where $\text{SDF}(\cdot)$ denotes the signed distance function (SDF) at position $[x,y]^\text T$ to the nearest obstacle. $r_\text{safe}$ denotes the safety distance related to the size of the robot. In this work, we compute the SDF by the algorithm proposed in work\cite{esdf}.

To make the above optimization problem easier to solve, we adopt the MINCO trajectory class and differential homogeneous mapping used in work\cite{minco} to eliminate the constraints (\ref{con:minco}). For ease of solving the problem, we also replace the integral in the objective function with a summation approximation. When the equation constraint (\ref{con:nonholo}) exists, PHR Augmented Lagrange Multiplier method\cite{alm} (PHR-ALM) is used to solve the simplified problem for higher accuracy. Conversely, we utilize the penalty function method, which is sufficient to obtain an acceptable solution. Besides, an efficient quasi-Newton method L-BFGS\cite{lbfgs} is chosen as the unconstrained optimization algorithm to work with PHR-ALM and the penalty function method.

\section{Experiments}
\label{sec:Experiments}

\subsection{Implementation details}
\label{subsec:Implementation details}
To validate the effectiveness and efficiency, we apply the proposed Perception-aware Planning framework on an omnidirectional vehicle (RoboMaster AI-2020) with a 70-degree Field of View (FoV) LiDAR (Livox Mid-70\footnote{\url{https://www.livoxtech.com/mid-70}}).
In real-world experiments, the odometry is provided by FAST-LIO2~\cite{fast-lio2}, a high-precision Lidar-Inertial Odometry (LIO) system. The ground truth of robot pose is provided by the motion tracking system (NOKOV\footnote{\url{https://www.nokov.com/}}).
All the computation is performed on an on-board computer with Intel i7-1165G7.
The observation model takes the distance from the scan point to the nearest plane, the same model defined in work~\cite{fast-lio2}:
\begin{align}
    h_j({^\text L}{\boldsymbol{p}_j},\boldsymbol{x}) = {^\text G}{\boldsymbol{u}_j}^\text T({^\text G}{\boldsymbol{R}_{L}}{^\text L}\boldsymbol{p}_j+{^\text G}{\boldsymbol{t}_{L}}-\boldsymbol{q}_j),\label{equ:observation}
\end{align}
where $\boldsymbol{x}$ indicates the robot position ${^\text G}\boldsymbol{t}_{\text L}$ and orientation ${^\text G}\boldsymbol{R}_{\text L}$ to be estimated, $\boldsymbol{p}_j$ is a LiDAR point, $\boldsymbol{u}_j$ is the normal vector of the nearest plane, $\boldsymbol{q}_j$ is a point on the nearest plane. 

To see if the metric is reliable in evaluating observation loss, we adopt an indicator Mean Disturbance-induced Error (MDE) serving as the evaluation ground truth, which is similar to $\boldsymbol{e}_p$ defined in work~\cite{nubert2022learning} as follows:
\begin{align}
    \text{MDE}=\frac{1}{N}\sum_{j=1}^{N}\rVert \log(\boldsymbol{T}^{-1}_{\text{gt}}\boldsymbol{T}_{\text{regi},j})^\vee\lVert_2^2,\label{equ:MDE}
\end{align}
where $N$ denotes the number of disturbances, $\boldsymbol{T}_{\text{regi},j}$ is the $j_{th}$ registration result start from a disturbed $\boldsymbol{T}_{\text{gt}}$ with iterated point-to-plane registration.

\subsection{Metric Effectiveness Validation}
\label{subsec:Metric Effectiveness Validation}
In this subsection, we demonstrate the effectiveness and characteristics of the proposed metrics in evaluating observation loss in several representative scenes built in UE5\footnote{\url{https://www.ue5.com}}.

As shown in Fig.~\ref{fig:scenarios}, scene (a) is rich in planar features where the robot can localize itself stably and robustly to observation perturbations. Scene (d) is a typical degenerate scenario.
Scene (c) is an unstructured scenario with roughly the same features, where the robot's pose estimation is sensitive to the initial guess $\boldsymbol{x}_0$.
Scene (b) shows some structured features in the unstructured grasslands.
\vspace{-0.3cm}

\begin{figure}[h]
    \centering
    \includegraphics[width=\columnwidth]{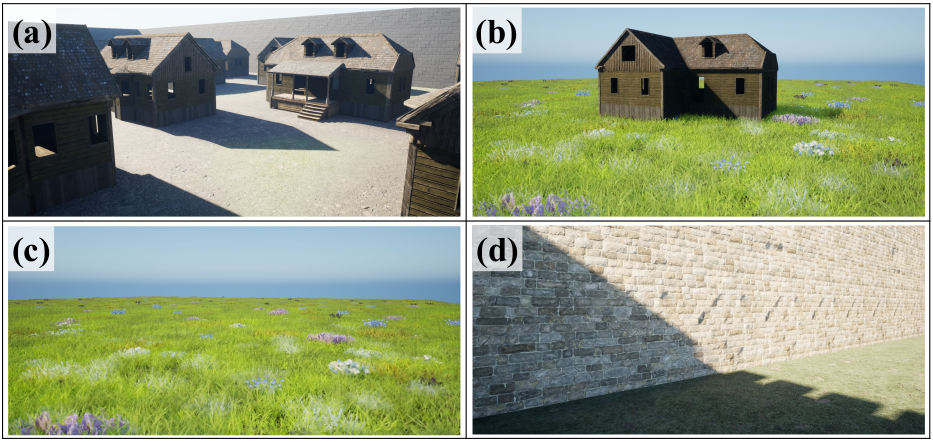}
    \caption{Four representative seances: (a) Houses, (b) Single house on the meadow, (c) Meadow, (d) Wall.}
    \label{fig:scenarios}
\end{figure}
\vspace{-0.3cm}

We first simulate a LiDAR scan at each scene to obtain observations $\boldsymbol{h}(\boldsymbol{x})$ with the LiDAR model described in Eq.~(\ref{equ:observation}).
Then we use metrics Eq.~(\ref{equ:lambda_min})-(\ref{equ:lambda_max}) to evaluate the loss of the simulated observation.
Besides, we calculate MDE for each scene as the evaluation's ground truth.
The result is shown in TABLE.~\ref{tab:bk}.
By comparing the ground truth with the evaluations by different metrics, we can judge which metric can provide the most reliable evaluation.
It's important to note that we are only concerned with the relative loss of different scenarios by the same metric. 

 \begin{table}[h]
    \vspace{-0.2cm}
	\small
	\centering
	\renewcommand\arraystretch{1.2}
	\caption{\label{tab:bk} observation loss in four scenes}
	\begin{tabular}{c|clllllllll}
		\hline
		\multirow{1}{*}{Metric}& \multirow{1}{*}{$a$} & \multicolumn{1}{c}{$b$} &  \multicolumn{1}{c}{$c$}&  \multicolumn{1}{c}{$d$}\\\hline
        $q^{(\text{min})}$ &0.320 &0.28 &0.31&12.32\\
        $q^{(\text{n})}$ &0.547 &0.91 &1.20&12.34\\
        $q^{(\text{max})}$ &12.15 &17.2 &16.3&568.8\\ 
        $\text{MDE}$ &0.733 &2.81 &3.80 &4.63\\
        \hline
	\end{tabular}
\end{table}

In Table.~\ref{tab:bk}, all of the three metrics can tell the observation loss in scene (d) is the worst, while only the evaluation with metric $q^{(n)}$ is consistent with the ground truth in the four scenes.
For $q^{(\text{min})}$, which is actually the degeneracy factor proposed in literature~\cite{zhang2016degeneracy}, cannot distinguish scenarios ($a$), ($b$), and ($c$) well because this metric only focuses on the structural richness of scenarios.
In addition, the reason why the evaluation with $q^{(\text{max})}$ does not distinguish scenarios ($b$) and ($c$) well is the introduction of over-conservative maximum eigenvalues, which leads to the evaluation influenced by the best-observed dimension of the state to be estimated.

\subsection{Experiment in Large Scale Map}
\label{subsec:Experiment in Large Scale Map}

To demonstrate the effectiveness of the proposed framework in scenarios with large ranges, we build a Gobi desert in UE5 as shown in Fig.~\ref{fig:sim}. We require the robot with the configuration in Sec.~\ref{subsec:Implementation details} to reach a distant warehouse from the inside of a small town. Stones are evenly distributed on the ground outside of towns to help robots locate themselves while traveling, and part of the stones are surrounded by weeds and bushes.
To further illustrate the characteristics of the framework with different metrics, we show two trajectories guided by metrics $q^{(\text{min})}$ and $q^{(n)}$ respectively.
\vspace{-0.5cm}

\begin{figure}[h]
    \centering
    \includegraphics[width=\columnwidth]{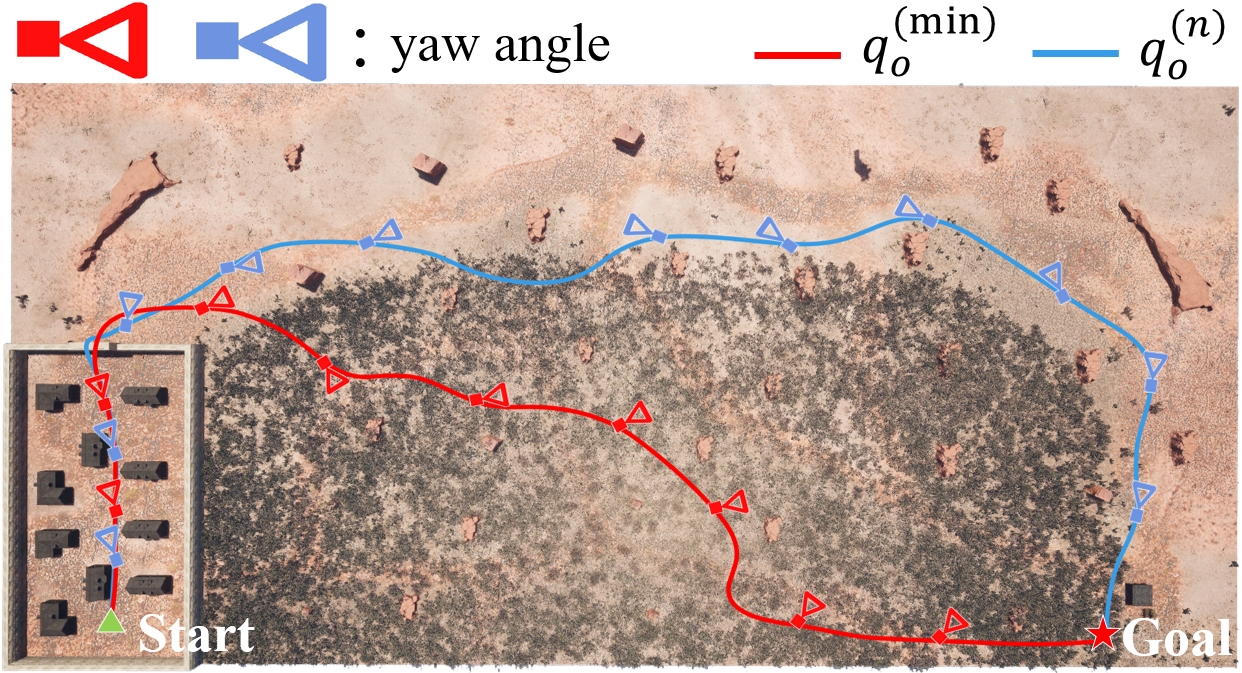}
    \caption{Robot trajectories by the proposed framework.}\label{fig:sim}
\end{figure}
\vspace{-0.4cm}

As we can see in Fig.~\ref{fig:sim}, the planner utilizing $q^{(\text{min})}$ chooses a trajectory with grass and bush, whereas the planner utilizing $q^{(\text{n})}$ prefers the path with well-structured features and devoid of grass and bush, which is because $q^{(\text{min})}$ only considers the feature richness of observations and make the robot pursuit area with rich features no matter whether it is preferable for robot localization. 
\vspace{-0.1cm}

\begin{figure}[h]
    \centering
    \includegraphics[width=\columnwidth]{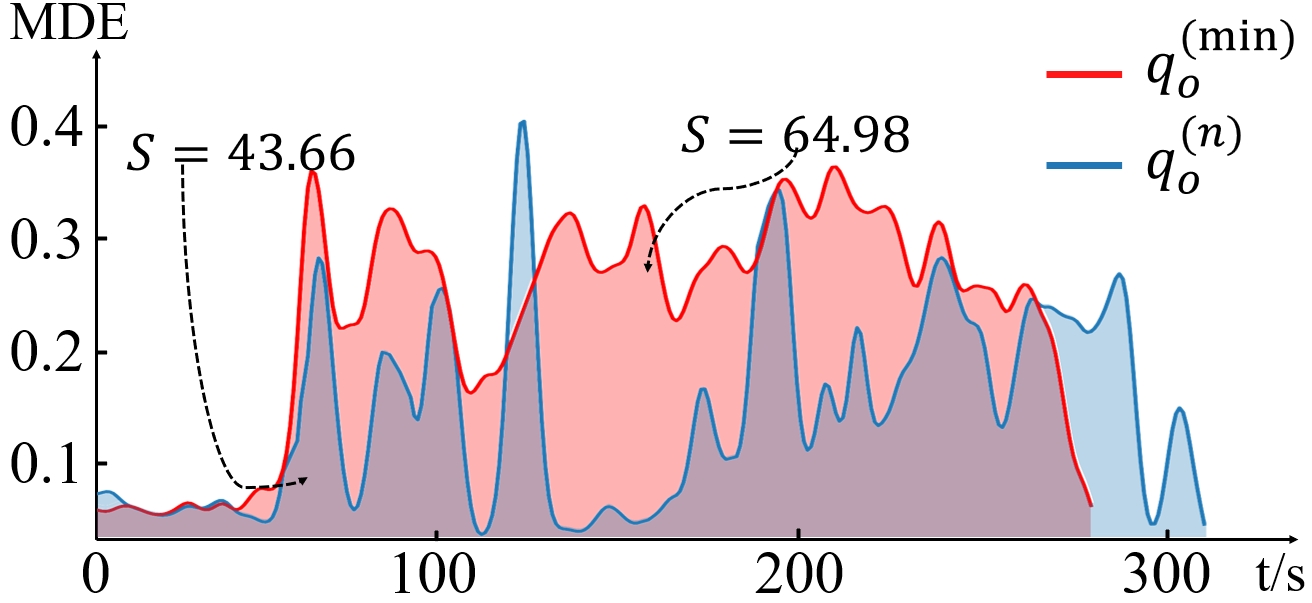}
	\caption{MDE along two different trajectories.}\label{fig:odom_err}
\end{figure}
\vspace{-0.3cm}

To evaluate how much effect the observations along the trajectories have on LAS, we calculate MDE for each pose sampled along the trajectories at a constant time interval. As shown in Fig.~\ref{fig:odom_err}, while the trajectory guided by $q^{(n)}$ is less rich in features compared to $q^{(\text{min})}$, the clear geometrical features result in smaller accumulated MDE ($S=43.66$) compared to the trajectory guided by $q^{(\text{min})}$ ($S=64.98$).

\subsection{Real-world Experiment}
\label{subsec:Real-world Experiment}
After showing the different features of the proposed metrics, we adapt our Perception-aware Planning framework to the real-world scenario. As shown in Fig.~\ref{fig:experiment_C}(a), a vehicle with configurations in Sec.~\ref{subsec:Implementation details} is required to navigate from point A to point B or C, during which process it has to localize itself with boxes and floor within the motion capture area.
According to Sec.~\ref{subsec:Metric Effectiveness Validation} and~\ref{subsec:Experiment in Large Scale Map}, metric $q^{(n)}$ has more holistic performance that provides reliable evaluations. Here, we show trajectories (\#2 and \#4) guided by metric $q^{(n)}$ and trajectories (\#1 and \#3) without metrics for comparison.
\vspace{-0.1cm}

\begin{figure}[h]
    \centering
    \includegraphics[width=\columnwidth]{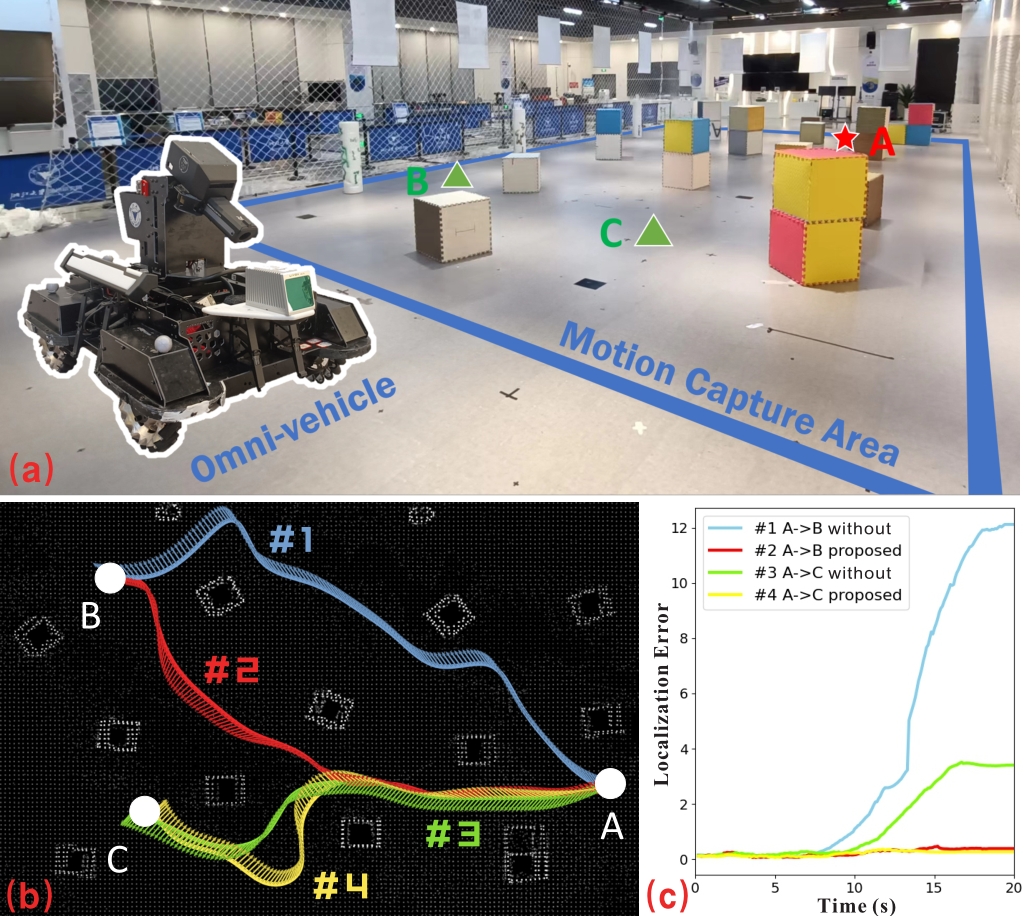}
    \caption{(a) real-world scenario, (b) generated trajectories, (c) Localization error along different trajectories.}\label{fig:experiment_C}
\end{figure}
\vspace{-0.3cm}

As shown in Fig.~\ref{fig:experiment_C}(b), the vehicle guided by $q^{(n)}$ is able to choose topologies and orientations with low observation loss.
We calculate the localization error of the robot when moving along each trajectory by $\rVert \log(\boldsymbol{T}^{-1}_{\text{gt}}\boldsymbol{T}_{\text{odom}})^\vee\lVert_2$, where $\boldsymbol{T}_{\text{gt}}$ is given by Motion Tracking System and $\boldsymbol{T}_{\text{odom}}$ is given by FAST-LIO2~\cite{fast-lio2}. As we can see in Fig.~\ref{fig:experiment_C}(c), trajectories \#2 and \#4 have lower localization error than \#1 and \#3, which demonstrates that the trajectories generated by the proposed framework are conducive to improving LAS.

\section{Conclusion}
\label{sec:Conclusion}

In this paper, we propose a LiDAR-based Perception-aware Planning framework and perturbation-induced metrics.
Through experiments, we show different features of the proposed metrics and validate the effectiveness of our framework.
At the same time, there is still space for improvement in our work.
Firstly, we assume that the localization algorithm essentially addresses a least square problem.
As a result, the proposed metrics may not be suitable for all localization algorithms, such as learning-based methods.
In addition, extensive real-world experiments across a broader range of scenarios are necessary to explore the limits of the proposed framework.
In the future, we will conduct more experiments to investigate the effectiveness of the framework and evolve the current SOLM into a dynamically updated grid structure for online replanning applications.

\appendix
\subsection{Lemma for 
Formula (\ref{eq:err_bound}) $\sim$ (\ref{eq:qo})}
\begin{lemma}
$\forall x\in[0,+\infty)$, $f_1(x), f_2(x)$ are continuously differentiable, and $f_1(x)\le f_2(x)$. If $f_1(0)=f_2(0)$, we have $f_1'(0)\le f_2'(0)$.\label{app: A}
\label{lem:1}
\end{lemma}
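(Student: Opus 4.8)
The plan is to prove the statement by a direct limiting argument using the definition of the derivative. Define $g(x) = f_2(x) - f_1(x)$ for $x \in [0,+\infty)$. By hypothesis $g$ is continuously differentiable, $g(x) \ge 0$ everywhere on the domain, and $g(0) = f_2(0) - f_1(0) = 0$. The goal reduces to showing $g'(0) \ge 0$, since $g'(0) = f_2'(0) - f_1'(0)$.

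First I would write the one-sided derivative at the endpoint as the limit
\begin{equation}
g'(0) = \lim_{x \to 0^+} \frac{g(x) - g(0)}{x - 0} = \lim_{x \to 0^+} \frac{g(x)}{x}.
\end{equation}
Then, for every $x > 0$ we have $g(x) \ge 0$ and $x > 0$, so each difference quotient $g(x)/x$ is nonnegative. Taking the limit as $x \to 0^+$ — which exists and equals $g'(0)$ because $g$ is differentiable at $0$ (with the understanding that at the left endpoint this is the right derivative) — preserves the weak inequality, giving $g'(0) \ge 0$. Hence $f_1'(0) \le f_2'(0)$, as claimed.

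The argument is essentially one line once set up, so there is no serious obstacle; the only points requiring a little care are bookkeeping rather than depth. One is that differentiability at the endpoint $0$ must be interpreted as one-sided (right) differentiability, which is the natural reading given the domain $[0,+\infty)$; this is what lets the difference-quotient limit be taken through $x \to 0^+$ only. The other is the standard fact that a limit of nonnegative quantities is nonnegative — this is where continuity/differentiability is used, merely to guarantee the limit exists. Note that continuous differentiability is stronger than needed: plain differentiability at $0$ suffices, but since the cited application (Inequality~(\ref{eq:err_bound}) and the passage to~(\ref{eq:qo})) supplies smooth bounds, stating it this way costs nothing.
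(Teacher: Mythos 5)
Your proposal is correct and is essentially identical to the paper's own proof: both define the difference $g=f_2-f_1$, observe that the difference quotient $g(\zeta)/\zeta$ is nonnegative for $\zeta>0$, and pass to the one-sided limit at $0$ to conclude $g'(0)\ge 0$. No issues.
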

\begin{proof}
Let $f(x)=f_2(x)-f_1(x)$, then $\forall\zeta>0$,
\begin{equation}
    0\le\frac{f_2(\zeta)-f_1(\zeta)}{\zeta}=\frac{f(\zeta)-f(0)}{\zeta}\nonumber.
\end{equation}
Thus, 
\begin{equation}
    0\le\lim_{\zeta\rightarrow0^+}\frac{f(\zeta)-f(0)}{\zeta}=f'(0)=f_2'(0)-f_1'(0). \nonumber
\end{equation}
\end{proof}

\vspace{-0.6cm}

\subsection{Derivations of Eq.(\ref{eq:sp_errb})}
\label{app: B}
\begin{align}
\left.\frac{d \lVert \Delta A\rVert_F}{d r_{\boldsymbol k}}\right|_{r_{\boldsymbol k}=0}
&=\left.\frac{d\sqrt{\sum_{j=1}^m\lVert r_{\boldsymbol k}A^\text T\delta\boldsymbol{k}_j\rVert_2^2}}{d r_{\boldsymbol k}}\right|_{r_{\boldsymbol k}=0} \nonumber \\
&=\lim_{r_{\boldsymbol k}\rightarrow0^+}\sqrt\frac{r_{\boldsymbol k}^2\sum_{j=1}^m\delta\boldsymbol{k}_j^\text TAA^\text T\delta\boldsymbol{k}_j}{r_{\boldsymbol k}^2}\nonumber\\
&=\sqrt{\sum_{j=1}^m\delta\boldsymbol{k}_j^\text TAA^\text T\delta\boldsymbol{k}_j},
\end{align}
\begin{align}
\left.\frac{\partial \lVert \Delta \boldsymbol b\rVert_2}{\partial r_{\boldsymbol t}}\right|_{r_{\boldsymbol k}=r_{\boldsymbol t}=0}&=
\left.\frac{\partial \lVert g(\delta\boldsymbol{k})b+\delta\boldsymbol{t}\rVert_2}{\partial r_{\boldsymbol t}}\right|_{r_{\boldsymbol k}=r_{\boldsymbol t}=0}\nonumber\\
&=\lim_{r_{\boldsymbol t}\rightarrow0^+}\sqrt\frac{r_{\boldsymbol t}^2\boldsymbol{\beta}^\text T\boldsymbol{\beta}}{r_{\boldsymbol t}^2}=1,\\
\left.\frac{\partial \lVert \Delta \boldsymbol b\rVert_2}{\partial r_{\boldsymbol k}}\right|_{r_{\boldsymbol k}=r_{\boldsymbol t}=0}&=\lim_{r_{\boldsymbol k}\rightarrow0^+}\sqrt\frac{\sum_{j=1}^m\lVert r_{\boldsymbol{k}}\boldsymbol{b}^\text T\delta\boldsymbol{k}_j \rVert_2^2}{r_{\boldsymbol k}^2}\nonumber\\
&=\sqrt{\sum_{j=1}^m\delta\boldsymbol{k}_j^\text T\boldsymbol{b}\boldsymbol{b}^\text T\delta\boldsymbol{k}_j},
\end{align}

\newlength{\bibitemsep}\setlength{\bibitemsep}{0.00\baselineskip}
\newlength{\bibparskip}\setlength{\bibparskip}{0pt}
\let\oldthebibliography\thebibliography
\renewcommand\thebibliography[1]{
    \oldthebibliography{#1}
    \setlength{\parskip}{\bibitemsep}
    \setlength{\itemsep}{\bibparskip}
}
\bibliography{references}

\end{document}